
\documentclass{article}


\usepackage{amsmath,amsfonts,bm}









\def\eqref#1{equation~\ref{#1}}









\def\1{\bm{1}}










\DeclareMathAlphabet{\mathsfit}{\encodingdefault}{\sfdefault}{m}{sl}
\SetMathAlphabet{\mathsfit}{bold}{\encodingdefault}{\sfdefault}{bx}{n}













\usepackage{microtype}
\usepackage{graphicx}
\usepackage{subfig}
\usepackage{booktabs} 

\usepackage{hyperref}



\usepackage[preprint]{icml2023}

\usepackage{amsmath}
\usepackage{amssymb}
\usepackage{mathtools}
\usepackage{amsthm}

\usepackage[capitalize,noabbrev]{cleveref}

\theoremstyle{plain}
\newtheorem{theorem}{Theorem}[section]

\newtheorem{lemma}[theorem]{Lemma}

\theoremstyle{definition}
\newtheorem{definition}[theorem]{Definition}
\newtheorem{assumption}[theorem]{Assumption}
\theoremstyle{remark}

\usepackage[textsize=tiny]{todonotes}

\usepackage{url}

\usepackage{booktabs}       
\usepackage{amsfonts}       
\usepackage{nicefrac}       
\usepackage{xcolor}         

\usepackage{cancel}
\usepackage{bbm}

\usepackage{wrapfig}
\usepackage{enumitem}
\usepackage{tikz}
\usetikzlibrary{calc}

\usepackage{multirow}
\usepackage{multicol}
\usepackage{makecell}
\usepackage{thmtools,thm-restate}

\icmltitlerunning{Towards Skilled Population Curriculum for Multi-Agent Reinforcement Learning}

\begin{document}

\twocolumn[
\icmltitle{Towards Skilled Population Curriculum\\for Multi-Agent Reinforcement Learning}



\icmlsetsymbol{equal}{*}

\begin{icmlauthorlist}
\icmlauthor{Rundong Wang}{equal,sch}
\icmlauthor{Longtao Zheng}{equal,sch,comp}
\icmlauthor{Wei Qiu}{sch}
\icmlauthor{Bowei He}{sch2}
\icmlauthor{Bo An}{sch}
\icmlauthor{Zinovi Rabinovich}{sch}
\icmlauthor{Yujing Hu}{comp}
\icmlauthor{Yingfeng Chen}{comp}
\icmlauthor{Tangjie Lv}{comp}
\icmlauthor{Changjie Fan}{comp}
\end{icmlauthorlist}

\icmlaffiliation{comp}{NetEase Fuxi AI Lab, China}
\icmlaffiliation{sch}{School of Computer Science and Engineering, Nanyang Technological University, Singapore}
\icmlaffiliation{sch2}{Department of Computer Science and Engineering, City University of Hong Kong, China}

\icmlcorrespondingauthor{Rundong Wang}{rundong001@e.ntu.edu.sg}
\icmlcorrespondingauthor{Longtao Zheng}{longtao001@e.ntu.edu.sg}

\icmlkeywords{Machine Learning, ICML}

\vskip 0.3in
]



\printAffiliationsAndNotice{\icmlEqualContribution} 

\begin{abstract}
Recent advances in multi-agent reinforcement learning (MARL) allow agents to coordinate their behaviors in complex environments. However, common MARL algorithms still suffer from scalability and sparse reward issues. One promising approach to resolving them is {\em automatic curriculum learning} (ACL). ACL involves a \emph{student} (curriculum learner) training on tasks of increasing difficulty controlled by a \emph{teacher} (curriculum generator). Despite its success, ACL's applicability is limited by (1) the lack of a general student framework for dealing with the varying number of agents across tasks and the sparse reward problem, and (2) the non-stationarity of the teacher's task due to ever-changing student strategies. As a remedy for ACL, we introduce a novel automatic curriculum learning framework, Skilled Population Curriculum (SPC), which adapts curriculum learning to multi-agent coordination. Specifically, we endow the student with population-invariant communication and a hierarchical skill set, allowing it to learn cooperation and behavior skills from distinct tasks with varying numbers of agents. In addition, we model the teacher as a contextual bandit conditioned by student policies, enabling a team of agents to change its size while still retaining previously acquired skills. We also analyze the inherent non-stationarity of this multi-agent automatic curriculum teaching problem and provide a corresponding regret bound. Empirical results show that our method improves the performance, scalability and sample efficiency in several MARL environments.
\end{abstract}

\section{Introduction}
\label{sec:intro}

Multi-agent reinforcement learning (MARL) has long been a go-to tool in complex robotic and strategic domains~\citep{robocup,dota2}. However, learning effective policies with sparse reward from scratch for large-scale multi-agent systems remains challenging. One of the challenges is the exponential growth of the joint observation-action space with an increasing number of agents. In addition, sparse reward signal requires a large number of training trajectories, posing difficulties in applying existing MARL algorithms directly to complex environments. As a result, these algorithms may produce agents that do not collaborate with each other, even when it would be of significant benefit \citep{zhang2021multi, yang2020overview}.

There are several lines of research related to the large-scale MARL problem with sparse reward, including reward shaping \citep{hu2020learning}, curriculum learning \citep{chen2021variational}, and learning from demonstrations \citep{huang2021tikick}. Among these approaches, the curriculum learning paradigm, in which the difficulty of experienced tasks and the population of training agents progressively grow, shows particular promise. In {\em automatic} curriculum learning (ACL), a teacher (curriculum generator) learns to adjust the complexity and sequencing of tasks faced by a student (curriculum learner). Several works have even proposed {\em multi-agent} ACL algorithms, based on approximate or heuristic approaches to teaching, such as DyMA-CL \citep{wang2020few}, EPC \citep{long2020evolutionary}, and VACL \citep{chen2021variational}. However, these approaches rely on a framework of an off-policy student with replay buffer that ignores the forgetting problem that arises when the agent population size grows, or make a strong assumption that the value of the learned policy does not change when agents switch to a different task. Moreover, the teacher in these approaches still faces a non-stationarity problem due to the ever-changing student strategies.
Another class of larger-scale MARL solutions is hierarchical learning, which utilizes temporal abstraction to decompose a task into a hierarchy of subtasks. This includes skill discovery \citep{yang2019hierarchical}, option as response \citep{vezhnevets2019options}, role-based MARL \citep{wang2020rode}, and two levels of abstraction \citep{pang2019reinforcement}. However, these approaches mostly focus on one specific task with a fixed number of agents and do not consider the transferability of learned skills. In this paper, we provide our insight into this question:

\textit{Whether an elaborate combination of principles from ACL and hierarchical learning can enable \textbf{complex} cooperation with \textbf{sparse reward} in \textbf{MARL}?}

Specifically, we present a novel automatic curriculum learning algorithm, Skilled Population Curriculum (SPC), that addresses the challenges of learning effective policies for large-scale multi-agent systems with sparse reward. The core idea behind SPC is to encourage the student to learn skills from tasks with different numbers of agents, akin to how team sports players train by gradually increasing the difficulty of tasks and the number of coordinating players.
To achieve this, SPC is implemented with three key components:

\textbf{The contextual bandit teacher} uses an RNN-based \citep{hochreiter1997long} imitation model to represent student policies and generate the bandit's context.

\textbf{Population-invariant communication} in the student module is implemented to handle varying number of agents across tasks. By treating each agent's message as a word and using a self-attention communication channel \citep{vaswani2017attention}, SPC supports an arbitrary number of agents to share messages.

\textbf{A hierarchical skill framework} is used in the student module to learn transferable skills in the sparse reward setting, where agents communicate on the high-level about a set of shared low-level policies.

Empirical results show that our method achieves state-of-the-art performance in several tasks in Multi-agent Particle Environment (MPE) \citep{lowe2017multi} and the challenging 5vs5 competition in Google Research Football (GRF) \citep{kurach2019google}. \footnote{The source code and a video demonstration can be found at \url{https://sites.google.com/view/marl-spc/}.}

\section{Preliminaries}

\textbf{Dec-POMDP.} A cooperative MARL problem can be formulated as a \textit{decentralized partially observable Markov decision process} (Dec-POMDP) \citep{bernstein2002complexity}, which is described as a tuple $\langle n, \boldsymbol{S}, \boldsymbol{A}, P, R, \boldsymbol{O}, \boldsymbol{\Omega}, \gamma\rangle$, where $n$ represents the number of agents. $\boldsymbol{S}$ represents the space of global states. $\boldsymbol{A} = \{A_i\}_{i=1,\cdots,n}$ denotes the space of actions of all agents. $\boldsymbol{O} = \{O_i\}_{i=1,\cdots,n}$ denotes the space of observations of all agents. $P:\boldsymbol{S} \times \boldsymbol{A} \to \boldsymbol{S}$ denotes the state transition probability function. All agents share the same reward as a function of the states and actions of the agents $R: \boldsymbol{S} \times \boldsymbol{A} \to \mathbb{R}$. Each agent $i$ receives a private observation $o_i \in O_i$ according to the observation function $ \boldsymbol{\Omega}(s, i): \boldsymbol{S} \to O_i$. $ \gamma \in [0,1] $ denotes the discount factor.

\textbf{Multi-armed Bandit.} Multi-armed bandits (MABs) are a simple but very powerful framework that repeatedly makes decisions under uncertainty. In this framework, a learner performs a sequence of actions and immediately observes the corresponding reward after each action. The goal is to maximize the total reward over a given set of $K$ actions and a specific time horizon $T$. The measure of success in MABs is often determined by the regret, which is the difference between the cumulative reward of an MAB algorithm and the best-arm benchmark.
One well-known MAB algorithm is the Exp3 algorithm \citep{auer2002nonstochastic}, which aims to increase the probability of selecting good arms and achieves a regret of $O(\sqrt{KT \log(K)})$ under a time-varying reward distribution. Another related concept is the contextual bandit problem \citep{hazan2007online}, where the learner makes decisions based on prior information as the context.

\section{Skilled Population Curriculum}

In this section, we first provide a formal definition of the curriculum-enhanced Dec-POMDP framework, which formulates the MARL with curriculum problem under the Dec-POMDP framework.
We then present our multi-agent ACL algorithm, Skilled Population Curriculum (SPC), as shown in Fig.~\ref{fig:overall_framework}. In the following subsections, we establish the curriculum learning framework in Sec.~\ref{sec:problem}, and then present a contextual multi-armed bandit algorithm as the teacher to address the non-stationarity in Sec.~\ref{sec:teacher}. Lastly, we introduce the student with transferable skills and population-invariant communication to tackle the varying number of agents and the sparse reward problem in Sec.~\ref{sec:student}.

\subsection{Problem Formulation} \label{sec:problem}

We consider environments from multi-agent automatic curriculum learning problems are equipped with parameterized task spaces and thus can be modeled as curriculum-enhanced Dec-POMDPs.

\begin{definition}[Curriculum-enhanced Dec-POMDP]\label{definition:ce-dec-pomdp}
A curriculum-enhanced Dec-POMDP is defined by a tuple $\langle\Phi, \mathcal{M}\rangle$, where $\Phi$ and $\mathcal{M}$ represent a task space and a Dec-POMDP, respectively. Given the task $\phi$, the Dec-POMDP $\mathcal{M(\phi)}$ is presented as $\left\{n^\phi, \boldsymbol{S}^\phi, \boldsymbol{A}^\phi, P^\phi, r^\phi, O^\phi, \boldsymbol{\Omega}^\phi, \gamma^\phi \right\}$. The superscript $\phi$ denotes that the Dec-POMDP elements are determined by the task $\phi$. Note that task $\phi$ can be a few parameters of the environment or task IDs in a finite task space. \textit{In a curriculum-enhanced Dec-POMDP, the objective is to improve the student's performance on the target tasks {\color{black}{through the sequence of training tasks given by the teacher.}}}. 
\end{definition}

Let $\tau$ denote a trajectory whose unconditional distribution $\operatorname{Pr}_{\mu}^{\pi, \phi}(\tau)$ (under a policy $\pi$ and a task $\phi$ with initial state distribution $\mu(s_0)$) is $\operatorname{Pr}_{\mu}^{\pi, \phi}(\tau)=\mu\left(s_{0}\right)\sum_{t=0}^\infty \pi\left(a_{t} \mid s_{t}\right) P^\phi\left(s_{t+1} \mid s_{t}, a_{t}\right)$. We use $p(\phi)$ to represent the distribution of target tasks and $q(\phi)$ to represent the distribution of training tasks at each task sampling step. We consider the joint agents' policies $\pi_{\theta}(a|s)$ and $q_{\psi}(\phi)$ parameterized by $\theta$ and $\psi$, respectively. The overall objective to maximize in a curriculum-enhanced Dec-POMDP is:
\begin{equation}
\begin{aligned}
J(\theta, \psi)&=\mathbb{E}_{\phi \sim p(\phi), \tau \sim \operatorname{Pr}_{\mu}^{\pi}}\left[R^\phi(\tau)\right] \\
&= \mathbb{E}_{\phi \sim q_{\psi}(\phi)}\left[\frac{p(\phi)}{q_{\psi}(\phi)}V\left(\phi, \pi_{\theta}\right)\right]
\end{aligned}
\end{equation}
where $R^\phi(\tau) = \sum_{t} \gamma^{t} r^\phi\left(s_{t}, a_{t}; s_0\right)$ and $V\left(\phi, \pi_{\theta}\right)$ represents the value function of $\pi_\theta$ in Dec-POMDP $\mathcal{M(\phi)}$.
However, when optimizing $q_{\psi}(\phi)$, we cannot get the partial derivative $\nabla_\psi J(\theta, \psi)=\nabla_\psi \sum_{\tau} \frac{1}{q_{\psi}(\phi)} R^\phi(\tau) \operatorname{Pr}_{\mu}^{\pi, \phi}(\tau)$\footnote{$p(\phi)$ is not in the partial derivative since it is a fixed distribution.} since the reward function and the transition probability function w.r.t number of agents are non-parametric, non-differentiable, and discontinuous in most MARL scenarios.

{\color{black}{Thus, we use the non-differentiable method, i.e., multi-armed bandit algorithms, to optimize $q_{\psi}(\phi)$, and use an RL algorithm (the student) in alternating periods to optimize $\pi_{\theta}(a|s)$.}} However, there are three key challenges in solving this problem: (1) The teacher is facing a non-stationarity problem due to the ever-changing student's strategies. (2) The student will forget the old tasks and need to re-learn them. Some tasks can be the prerequisites of other tasks, while some can be inter-independent and parallel. (3) There is a lack of a general student framework to deal with the varying number of agents across tasks and the sparse reward problem.

\begin{figure*}[t]
    \centering
    \includegraphics[width=\linewidth]{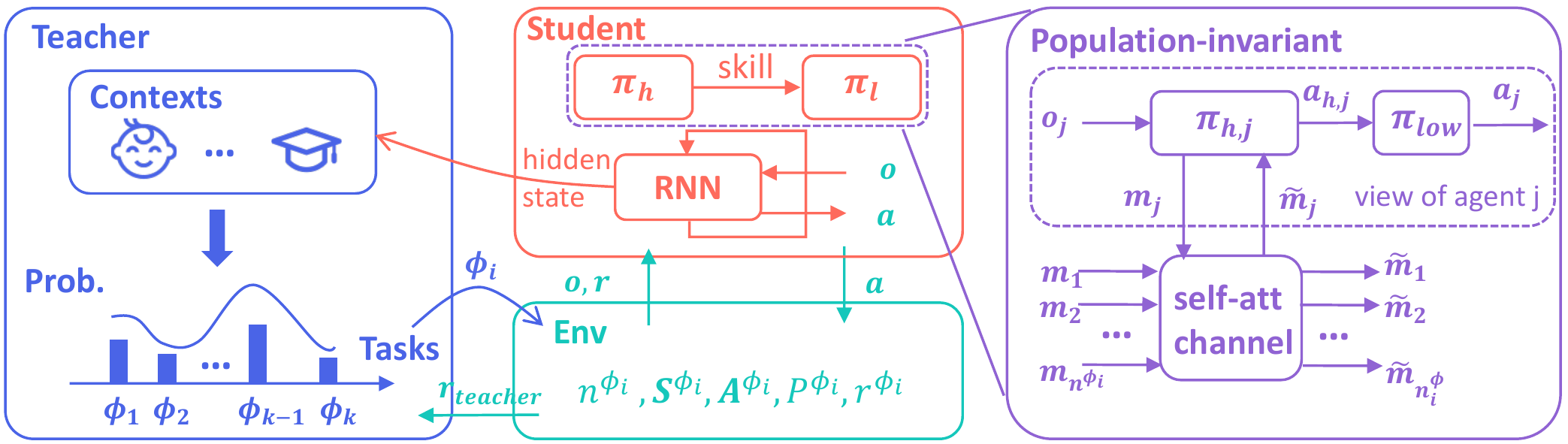}
    \caption{The overall framework of SPC. It consists of three parts: configurable environments, a teacher, and a student. {\color[HTML]{4A64E7}Left.} The teacher is modeled as a contextual multi-armed bandit. At each teacher timestep, the teacher chooses a training task from the distribution of bandit actions. {\color[HTML]{FF7F00}Mid.} The student is endowed with a hierarchical skill framework and population-invariant communication. It is trained with MARL algorithms on the training tasks. The student returns not only the hidden state of its RNN imitation model as contexts to the teacher, but also the average discounted cumulative rewards on the testing task. {\color[HTML]{9063D3}Right.} The student learns hierarchical policies, with the population-invariant communication taking place at the high-level, implemented with a self-attention communication channel to handle the messages from a varying number of agents. The agents in the student share the same low-level policy.
    }
    \label{fig:overall_framework}
\end{figure*}

\subsection{Teacher as a Non-Stationary Contextual Bandit} \label{sec:teacher}

As previously discussed, the teacher faces a non-stationarity problem due to the ever-changing student's strategies during the learning process. Specifically, as the student learns across different tasks in different learning stages, the teacher will observe varying student performance when providing the same task, resulting in a time-varying reward distribution for the teacher. In addition, the student may forget previously learned policies. To mitigate this problem, the teacher should balance the exploitation of tasks that have been found to benefit the student's performance on the target tasks, with the exploration of tasks that may not directly facilitate the student's learning.

Fortunately, we notice that the non-stationarity stems from the student, which can be mitigated with a contextual bandit which embeds the student policy into the context. As shown in Fig.~\ref{fig:overall_framework}~{\color[HTML]{4A64E7}Left}, the teacher utilizes the student's policy representation as the context and chooses a task from the distribution of training tasks. Specifically, we extend the Exp3 algorithm \citep{auer2002nonstochastic} by incorporating contexts through a two-step online clustering process \citep{zhang1996birch}.  The context, represented by $x$, is the student's policy representation. The teacher's action is a specific task, denoted by $\phi$, and the teacher's reward is the return of the student in the target tasks. The teacher's algorithm is outlined in Alg.~\ref{alg:teacher}. During the sampling stage (steps 1-5), the teacher selects a task for the student's training. In the training stage (steps 6-7), the teacher adjusts the parameters based on the evaluation reward received from the student.

\begin{algorithm}[th]
  \caption{Teacher Sampling and Training}
  \label{alg:teacher}
\textbf{Input:} Context $x$, the number of Clusters $N_c$, $N_c$ instances of Exp3 with task distribution $w(\phi_k, c) \text{ for } k = 1, \dots, K$ and for $c=1, \dots, N_c$, learning rate $\alpha$, a buffer maintaining the historical contexts\\
\textbf{Output:} $\mathcal{M(\phi)}=\left\{n^\phi, \boldsymbol{S}^\phi, \boldsymbol{A}^\phi, P^\phi, r^\phi, O^\phi, \boldsymbol{\Omega}^\phi, \gamma^\phi \right\}$, the teacher bandit parameters\\
\textbf{Sampling}\\
1. Get the the context $x$, and save it to the buffer \\
2. Run the online cluster algorithm and get the index of the cluster center $c(x)$ \\
3. Let the active Exp3 instance be the instance with index $c(x)$ \\
4. Set the probability $p(\phi_k, c(x))= \frac{(1-\alpha) w(\phi_k, c(x))}{\sum_{j=1}^{K} w(\phi_k, c(x))}+\frac{\alpha}{K}$ for each task $\phi_k$ \\
5. Sample a new task according to the distribution of $p_{\phi_k,c}$ \\
\textbf{Training} \\
6. Get the return (discounted cumulative rewards) from student testing $r$ \\
7. Update the active Exp3 instance by setting $w(\phi_k, c(x)) = w(\phi_k, c(x))e^{\alpha r / K}$
\end{algorithm}

\subsubsection{Context Representation}

Upon analysis, it is essential to learn an effective representation for the student's policy as the context. One straightforward representation is to use the student parameters $\theta$ directly as the context. However, the number of parameters is too large to be used as the input of neural network if we change the student's architecture. Therefore, we propose an alternative method.

A principle for learning a good representation of a policy is \textit{predictive representation}, which means the representation should be accurate to predict policy actions given states. In accordance with this principle, we utilize an imitation function through supervised learning. Supervised learning does not require direct access to reward signals, making it an attractive approach for reward-agnostic representation learning. Intuitively, the imitation function attempts to mimic low-level policy based on historical behaviors. In practice, we use an RNN-based imitation function $f_{im}: \mathcal{S} \times \mathcal{A} \rightarrow [0,1]$. Since recurrent neural networks are theoretically Turing complete \citep{hyotyniemi1996turing}, their internal states can be used as the representation of the student's policy. We train this imitation function by using the negative cross entropy objective $\mathbb{E}[\log f_{im}\left(s, a\right)] $.

\subsubsection{Regret Analysis}

In this subsection, we demonstrate that the proposed teacher algorithm has a regret bound of $\mathbb{E}[R(T)] = O\left(T^{2 / 3}(L K \log T)^{1 / 3}\right)$, where $T$ is the number of total rounds, $L$ is the Lipschitz constant, and $K$ is the number of arms (the number of the teacher's actions). The regret analysis is used to justify the usage of the bandit algorithm in the non-stationary setting. The regret bound represents the optimality of SPC, as the teacher's reward is the return of the student in the target tasks.

First, we introduce the Lipschitz assumption about the generalization ability of the task space.
\begin{assumption}[Lipschitz continuity w.r.t the context]\label{assumption:lipschitz}
Without loss of generality, the contexts are mapped into the $[0, 1]$ interval, so that the expected rewards for the teacher are Lipschitz with respect to the context.
\begin{equation}
\begin{aligned}
& \left|r(\phi \mid x)-r\left(\phi  \mid x^{\prime}\right)\right| \leq L \cdot\left|x-x^{\prime}\right| \\ 
& \text {for any arm } \phi \in \Phi \text { and any pair of contexts } x, x^{\prime} \in \mathcal{X}
\end{aligned}
\end{equation}
where $L$ is the Lipschitz constant, and $\mathcal{X}$ is the context space.
\end{assumption}

This assumption suggests that for any policy trained on a set of tasks, the rate at which performance improves is not faster than the rate at which the policy changes. This is a realistic assumption, as we cannot expect the student to achieve a significant improvement on a task with only a few training steps under a new context. We use an existing contextual bandit algorithm for a limited number of contexts \citep{auer2002nonstochastic} (see Appendix~\ref{appendix:bandit}) and Lemma~\ref{lemma:dummy} as a foundation for proving Theorem~\ref{theorem: regret}.

\begin{lemma}
\label{lemma:dummy} Alg.~\ref{alg:dummy} has a regret bound of $\mathbb{E}[R(T)] = \mathcal{O}(\sqrt{T K |\mathcal{X}| \log K})$.
\end{lemma}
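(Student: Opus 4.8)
The plan is to exploit the structure of Alg.~\ref{alg:dummy}, which handles the finitely many contexts by maintaining a \emph{separate} copy of Exp3 for each context $x \in \mathcal{X}$, and to reduce the analysis to the single-instance Exp3 guarantee. First I would fix the comparator: since $\mathcal{X}$ is finite and the comparison class is the best mapping from contexts to arms, the optimal benchmark plays the per-context best arm $\phi^\ast(x)$ for each $x$ independently. Because each round is routed to the Exp3 instance indexed by its context, and those instances never interact, the cumulative regret decomposes exactly as a sum over contexts,
\begin{equation}
\mathbb{E}[R(T)] = \sum_{x \in \mathcal{X}} \mathbb{E}[R_x(T_x)],
\end{equation}
where $T_x$ is the number of rounds in which context $x$ is presented (so $\sum_x T_x = T$) and $R_x$ is the regret incurred internally by the instance assigned to $x$, measured against $\phi^\ast(x)$.

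Next I would invoke the standard Exp3 bound $O(\sqrt{KT\log K})$ (which holds even against a time-varying reward sequence, exactly the non-stationary regime we care about here). Applied to the instance for context $x$ over its $T_x$ active rounds, this gives $\mathbb{E}[R_x(T_x)] = O(\sqrt{T_x K \log K})$. Summing over contexts yields
\begin{equation}
\mathbb{E}[R(T)] = O\!\left(\sqrt{K \log K}\; \sum_{x \in \mathcal{X}} \sqrt{T_x}\right).
\end{equation}
The final step is purely arithmetic: using $\sum_x T_x = T$ together with the Cauchy--Schwarz inequality (equivalently, concavity of $\sqrt{\cdot}$),
\begin{equation}
\sum_{x \in \mathcal{X}} \sqrt{T_x} \le \sqrt{|\mathcal{X}| \sum_{x \in \mathcal{X}} T_x} = \sqrt{|\mathcal{X}|\, T},
\end{equation}
so that $\mathbb{E}[R(T)] = O(\sqrt{T K |\mathcal{X}| \log K})$, as claimed. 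The $|\mathcal{X}|$ factor is precisely the price of competing against the stronger per-context optimum.

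I expect the main obstacle to be formalizing the decomposition rather than the final bound. One must argue that partitioning the rounds by context and running independent learners does not inflate the regret, i.e. that the per-instance Exp3 guarantee still applies on the \emph{subsequence} of rounds carrying a fixed context, even though the gaps between consecutive appearances of that context are arbitrary. Since the Exp3 analysis depends only on the sequence of rewards the instance actually observes and is indifferent to the skipped rounds, this subsequence argument goes through. Care is needed because $T_x$ may be random: there I would either condition on the realized context sequence before summing, or apply the concavity bound in expectation via $\mathbb{E}[\sqrt{T_x}] \le \sqrt{\mathbb{E}[T_x]}$ and then Cauchy--Schwarz on $\sum_x \sqrt{\mathbb{E}[T_x]}$, which leaves the order bound unchanged.
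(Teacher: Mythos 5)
Your proof is correct and is precisely the standard argument behind this lemma: the paper itself states it without proof, citing \citet{auer2002nonstochastic} for the square-root dependence on $|\mathcal{X}|$ that arises from running separate Exp3 copies per context, and your decomposition into per-context Exp3 regrets $O(\sqrt{T_x K \log K})$ followed by Cauchy--Schwarz over $\sum_x \sqrt{T_x} \le \sqrt{|\mathcal{X}|\,T}$ is exactly the argument that citation refers to. Your attention to the subsequence issue (Exp3's guarantee depending only on the rewards the instance actually observes) and to the randomness of $T_x$ via Jensen's inequality addresses the only points where care is genuinely needed, so nothing is missing.
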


Lemma~\ref{lemma:dummy} introduces a square root dependence on $|\mathcal{X}|$ if separate copies of Exp3 are run for each context \citep{auer2002nonstochastic}. This motivates us to address the large context space by utilizing discretization techniques.

\begin{restatable}{theorem}{regretbound}
\label{theorem: regret}
Consider the Lipschitz contextual bandit problem with contexts in $[0,1]$. The Alg.~\ref{alg:teacher} yields regret $\mathbb{E}[R(T)]=O\left(T^{2 / 3}(L K \ln T)^{1 / 3}\right)$.
\end{restatable}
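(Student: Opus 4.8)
The plan is to reduce the Lipschitz contextual problem to the finite-context guarantee of Lemma~\ref{lemma:dummy} by means of a uniform discretization of the context interval, and then tune the discretization scale. Observe that the online clustering step in Alg.~\ref{alg:teacher} is precisely such a discretization: contexts assigned to the same cluster are treated identically, and one Exp3 instance is maintained per cluster. So I would first make this explicit by partitioning $[0,1]$ into $m=\lceil 1/\epsilon\rceil$ sub-intervals of width $\epsilon$, fixing a representative context in each, and viewing Alg.~\ref{alg:teacher} as running an independent copy of Exp3 over the $K$ arms within each of the $m$ buckets.

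Next I would decompose the regret against the optimal context-dependent arm-selection policy into two additive pieces by inserting the intermediate benchmark of the best fixed arm per bucket:
\begin{equation}
\mathbb{E}[R(T)] \le \underbrace{\big(\text{best per-context} - \text{best per-bucket}\big)}_{\text{discretization error}} + \underbrace{\big(\text{best per-bucket} - \text{algorithm}\big)}_{\text{learning regret}}.
\end{equation}
For the discretization error, Assumption~\ref{assumption:lipschitz} controls how much replacing a true context by its bucket representative can change the expected reward of \emph{any} arm: the change is at most $L\epsilon$ per round, so the per-context optimum exceeds the per-bucket optimum by at most $L\epsilon$ per round, hence by at most $LT\epsilon$ in total. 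For the learning regret, I would apply Lemma~\ref{lemma:dummy} with effective context cardinality $|\mathcal{X}|=m$, giving $O(\sqrt{TKm\log K})=O(\sqrt{TK\log K/\epsilon})$ (if the $T$ rounds are unevenly spread across buckets, Cauchy--Schwarz together with the concavity of the square root recovers the same aggregate bound).

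Combining the two pieces yields $\mathbb{E}[R(T)] \le LT\epsilon + O(\sqrt{TK\log K/\epsilon})$, and I would then optimize over the free parameter $\epsilon$. Balancing the two terms gives $\epsilon \asymp (K\log K/(L^2 T))^{1/3}$, at which point both contributions equal $O(T^{2/3}(LK\log K)^{1/3})$, matching the claimed rate $O(T^{2/3}(LK\ln T)^{1/3})$ up to the precise logarithmic factor carried in from the base bandit guarantee.

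The main obstacle is the intermediate decomposition: I must verify that the per-context optimum is well approximated by the per-bucket optimum, which is exactly where the Lipschitz assumption does the work, and confirm that the two error sources are genuinely additive rather than coupled through the algorithm's cluster assignments. The remaining balancing is a routine one-parameter optimization, so the crux of the argument lies in setting up the discretization cleanly and invoking Lemma~\ref{lemma:dummy} with the discretized context set.
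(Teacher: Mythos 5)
Your proposal is correct and follows essentially the same route as the paper's proof: a uniform $\epsilon$-mesh on $[0,1]$, reduction to Lemma~\ref{lemma:dummy} over the discretized contexts, the additive decomposition into discretization error (at most $L\epsilon$ per round by Assumption~\ref{assumption:lipschitz}, hence $LT\epsilon$ in total) plus the finite-context learning regret $O\bigl(\sqrt{TK\log K/\epsilon}\bigr)$, and the final balancing $\epsilon = \Theta\bigl((K\log T/(L^{2}T))^{1/3}\bigr)$. The $\log K$ versus $\ln T$ mismatch you flag at the end appears in the paper's own proof as well (Lemma~\ref{lemma:dummy} carries $\log K$ while the paper's final display switches to $\log T$), so it is a cosmetic inconsistency inherited from the source bound rather than a gap in your argument.
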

\begin{proof}
See Appendix~\ref{appendix: proof}.
\end{proof}

In practice, the high-dimensional context space cannot be discretized using a uniform mesh in $[0,1]$ as in the proof of Theorem~\ref{theorem: regret}. To address this issue, we utilize the Balanced Iterative Reducing and Clustering using Hierarchies (BIRCH) online clustering algorithm \citep{zhang1996birch} to discretize the context space. BIRCH is an efficient and easy-to-update algorithm that can effectively cluster large datasets. In this case, it is used to cluster the high-dimensional RNN-based policy representation. The resulting clusters can be seen as an approximation of a uniform mesh.

\subsection{Student with Population-Invariant Skills} \label{sec:student}

We propose a population-invariant skill framework to address the challenges of varying number of agents and sparse reward problem. This framework allows agents to communicate via a self-attention channel, enabling them to learn transferable skills across different tasks. The student module is designed to be algorithm-agnostic and is orthogonal to any state-of-the-art MARL algorithm. While there have been some efforts in the literature to address the varying number of agents~\citep{iqbal2021randomized, hu2021updet}, these approaches heavily rely on prior knowledge of the environments.

\textbf{Population-Invariant Teamwork Communication.} In order to enable the population-invariant property and learn tactics among agents, we introduce communication. Leveraging the transformer architecture's capability to process inputs of varying lengths \citep{vaswani2017attention}, we incorporate self-attention into our communication mechanism. As illustrated in Fig.~\ref{fig:overall_framework}~{\color[HTML]{9063D3}Right}, each agent $j$ receives an observation $o_j$ and encodes it into a message vector $m_j = f\left(o_{j}\right)$ which is then sent through a self-attention channel, where $f$ is an observation encoder function.

The channel aggregates all messages and sends the new message vector, $\tilde{m}_{j}$, through the self-attention mechanism. Concretely, given the channel input $\mathbf{M} = [m_1, m_2, \cdots, m_n] \in R^{n \times d_m}$, and the trainable weight of the channel $\mathbf{W}_{Q}, \mathbf{W}_{K}, \mathbf{W}_{V} \in R^{d_m \times d_{m}}$, we obtain three distinct representations: $\mathbf{Q}=\mathbf{M} \mathbf{W}_{Q}, \mathbf{K}=\mathbf{M} \mathbf{W}_{K}, \mathbf{V}=\mathbf{M W}_{V}$. Then the output messages are 
\begin{equation}
\mathbf{\tilde{M}} = \operatorname{Attention}(\mathbf{Q}, \mathbf{K}, \mathbf{V})=\operatorname{softmax}\left(\frac{\mathbf{Q} \mathbf{K}^{T}}{\sqrt{d_{m}}}\right) \mathbf{V}
\end{equation}
where $d_m$ is the dimension of the messages. As the dimensions of the trainable weight are independent of the number of agents, our student models can leverage the population-invariant property to effectively learn tactics.

\textbf{Transferable Hierarchical Skills.} As depicted in the dotted box in Fig.~\ref{fig:overall_framework}~{\color[HTML]{9063D3}Right}, after receiving the new messages $\tilde{m}_{j}$ from the channel, each agent employs a high-level action (skill) $a_{h,j} = \pi_{h,j}(o_j, \tilde{m}_{j})$ to execute the low-level policy $a_{j} = \pi_{low}(o_j, a_{h,j})$. In this work, we generalize the high-level action (skill) $a_{h,j}$ to a continuous embedding space, so that the skill can be either a latent continuous vector as in DIAYN \citep{eysenbach2018diversity}, or a categorical distribution for sampling discrete options \citep{bacon2017option}.

\textbf{Implementation.} We implement the high- and low-level policies in the student with Proximal Policy Optimization (PPO) \citep{schulman2017proximal}. Following the common practice proposed in~\citep{fu2022revisiting}, the high-level policy for each agent is learned independently, whereas the low-level policies share parameters, as the fundamental action pattern should be consistent among different agents. The low-level agents are rewarded by the environment, while the high-level policy is trained to take actions at fixed intervals. Within this interval, the cumulative low-level reward is used as the high-level reward. When using a categorical distribution to enable discrete skills, we sample an ``option'' from the distribution and provide the corresponding one-hot embedding to the low-level policy.

\begin{figure*}[!t]
  \centering
     \subfloat[\label{subfig:mpe_env}]{%
      \includegraphics[width=0.42\textwidth]{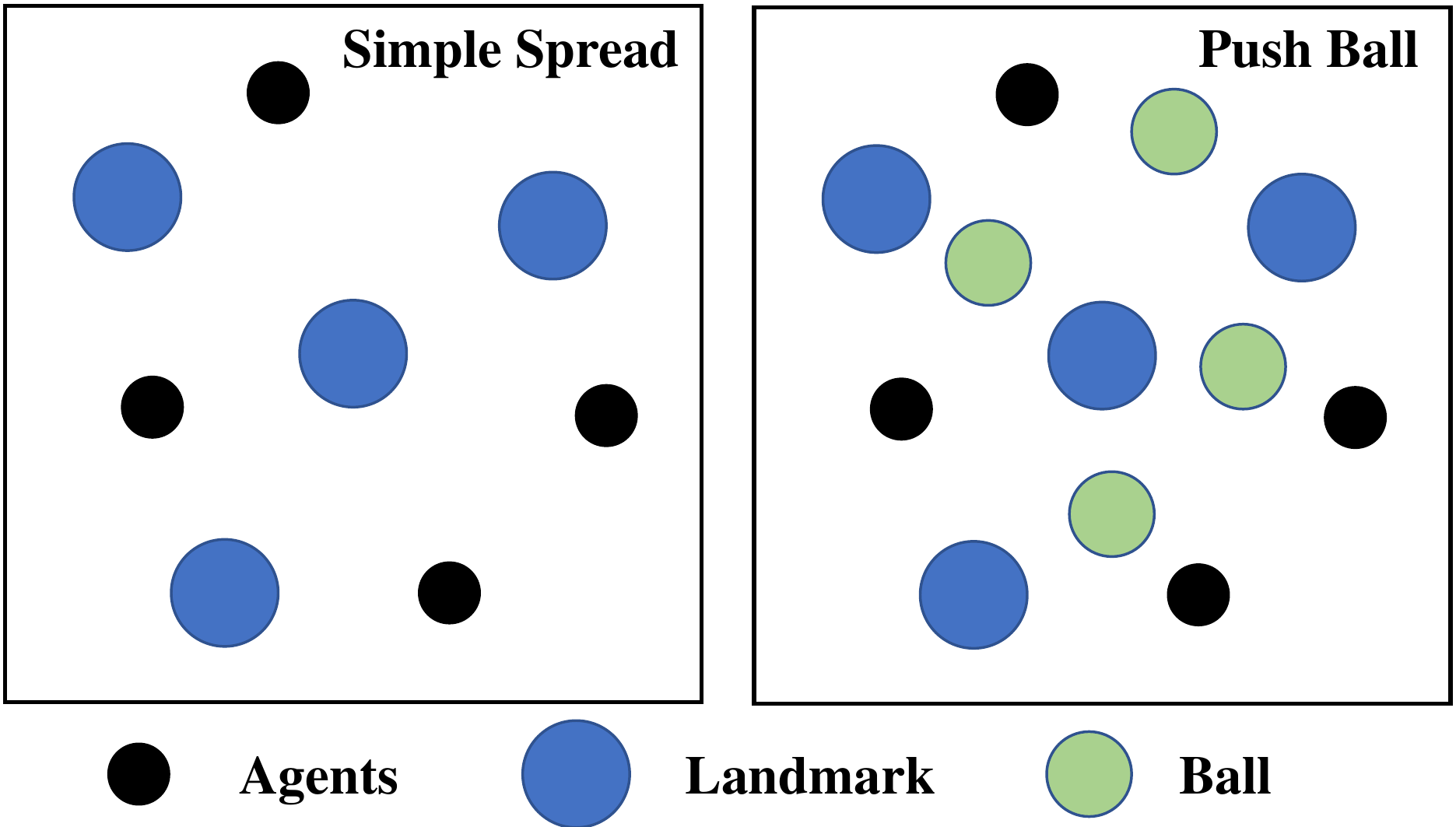}
     }
     \subfloat[\label{subfig:football_env}]{%
      \includegraphics[width=0.42\textwidth]{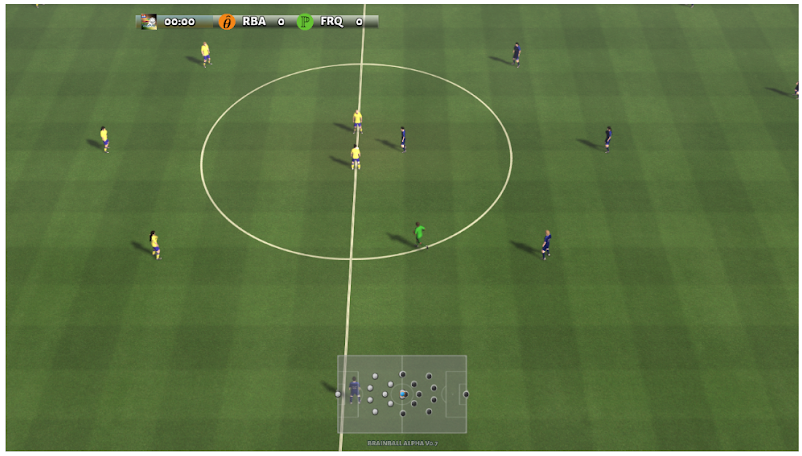}
     }
    \caption{(a) Multi-agent Particle Environment. (b) Google Research Football.}
    \vspace{-10pt}
      \label{fig:environment}

\end{figure*}

\section{Further Related Work}

\textbf{Automatic Curriculum Learning in MARL.} Curriculum learning is a training strategy that mimics the human learning process by organizing tasks based on their difficulty level \citep{portelas2020automatic}. The selection of tasks is formulated as a Curriculum Markov Decision Process (CMDP) \citep{narvekar2018learning}. Automatic Curriculum Learning mechanisms aim to learn a task selection function based on past interactions, such as ADR \citep{akkaya2019solving, mehta2020active}, ALP-GMM \citep{portelas2020teacher}, SPCL \citep{jiang2015self}, GoalGAN \citep{florensa2018automatic}, PLR~\citep{jiang2021prioritized, jiang2021replay}, SPDL~\citep{klink2020self}, CURROT~\citep{klink2022curriculum}, and graph-curriculum~\citep{svetlik2017automatic}. Recently, several MARL curriculum learning frameworks have been proposed, such as open-ended evolution \citep{banzhaf2016defining, lehman2008exploiting, standish2003open}, population-based training \citep{jaderberg2019human, liu2019emergent}, meta-learning \citep{gupta2021dynamic,portelas2020meta} and training with emergent curriculum \citep{baker2019emergent, leibo2019autocurricula, portelas2020automatic}. In summary, these frameworks share a common principle of an automatic curriculum that continually generates improved agents through selection pressure among a population of self-optimizing agents.

\textbf{Hierarchical MARL and Communication.} Hierarchical reinforcement learning (HRL) has been extensively studied to address the issue of sparse reward and facilitate transfer learning. Single-agent HRL focuses on learning the temporal decomposition of tasks, either by learning subgoals~\citep{nachum2018data, nachum2018near, sukhbaatar2018learning, nair2019hierarchical, wang2021i2hrl} or by discovering reusable skills~\citep{daniel2012hierarchical, gregor2016variational, shankar2020learning, sharma2020dynamics}. Recent developments in hierarchical MARL have been discussed in Sec.~\ref{sec:intro}. In multi-agent settings, communication has been effective in promoting cooperation among agents \citep{foerster2016learning, das2019tarmac, sukhbaatar2016learning, singh2018learning, jiang2018learning, kim2019learning, wang2020learning}. However, current approaches that extend HRL to multi-agent systems or utilize communication are limited to a fixed number of agents and lack the ability to transfer to different agent counts. 

\section{Experiments} \label{sec:exp}

To demonstrate the effectiveness of our approach, we conduct experiments on several tasks in two environments: Simple-Spread and Push-Ball in the Multi-agent Particle Environment (MPE) \citep{lowe2017multi}, and the challenging 5vs5 task of the Google Research Football (GRF) environment \citep{kurach2019google}.
We aim to investigate the following research questions:

\textbf{Q1}: \textit{Is curriculum learning necessary in complex large-scale MARL problems?} (Sec.~\ref{subsec:q1}) 

\textbf{Q2}: \textit{Can SPC outperform previous curriculum-based MARL methods? If so, which components of SPC contribute the most to performance gains?} (Sec.~\ref{subsec:q2}) 

\textbf{Q3}: \textit{Can SPC effectively learn a curriculum for the student?} (Sec.~\ref{subsec:q3})

\subsection{Environments, Baselines and Metric} \label{subsec:env}

\textbf{Environments.} In the GRF 5vs5 scenario, we control four agents, excluding the goalkeeper, to compete against the built-in AI opponents. Each agent observes a compact encoding, consisting of a 115-dimensional vector that summarizes various aspects of the game, such as player coordinates, ball possession and direction, active players, and game mode. The available action set for an individual agent includes 19 discrete actions, such as idle, move, pass, shoot, dribble, etc. The GRF provides two types of rewards: scoring and checkpoints, to encourage agents to move the ball forward and make successful shots. Additionally, we include a shooting reward in the challenging GRF 5vs5 task. For curriculum, we select several basic scenarios in GRF, including 3vs3, Pass-Shoot, 3vs1, and Empty-Goal.

In MPE, we investigate Simple-Spread and Push-Ball (see Fig.~\ref{subfig:mpe_env}). In Simple-Spread, there are $n$ agents that need to cover all $n$ landmarks. Agents are penalized for collisions and only receive a positive reward when all the landmarks are covered. In Push-Ball, there are $n$ agents, $n$ balls, and $n$ landmarks. The agents must push the balls to cover each landmark. A success reward is given after all the landmarks have been covered.

\textbf{Baselines.} We compare our approach to the following methods in Table~\ref{tab:baselines} as baselines\footnote{We also run CDS \citep{li2021celebrating} and CMARL \citep{wu2021containerized}, but we have not included their performance because the goal difference reported in CMARL \citep{wu2021containerized} is relatively low compared to our method.}:

\begin{table}[ht]
    \vspace{-10pt}
    \caption{Baseline algorithms.}
    \vspace{10pt}
    \centering
    \begin{tabular}{crlr}
        \toprule
        \multicolumn{2}{c}{\textbf{Categories}} &
        \multicolumn{2}{l}{\textbf{Methods}} \\
        \cmidrule(lr){1-2}
        \cmidrule(lr){3-4}
        
        \multicolumn{2}{c}{\multirow{2}{*}{\makecell{MARL}}} &  
        \multicolumn{2}{l}{QMIX \cite{rashid2018qmix}} \\
        \multicolumn{2}{c}{} &
        \multicolumn{2}{l}{IPPO \cite{de2020independent}} \\
        
        \cmidrule(lr){1-2}
        \cmidrule(lr){3-4}

        \multicolumn{2}{c}{\multirow{2}{*}{Curriculum-based}} & 
        \multicolumn{2}{l}{IPPO with uniform task sampling} \\
        \multicolumn{2}{c}{} & 
        \multicolumn{2}{l}{VACL \cite{chen2021variational}} \\
        
        \cmidrule(lr){1-2}
        \cmidrule(lr){3-4}

        \multicolumn{2}{c}{\multirow{2}{*}{Ablation Study}} & 
        \multicolumn{2}{l}{COST with uniform task sampling} \\
        \multicolumn{2}{c}{} & 
        \multicolumn{2}{l}{COST without HRL and COM} \\
        \toprule
    \end{tabular}
    \label{tab:baselines}
\end{table}

\begin{figure*}[!t]
  \centering
     \subfloat[{Win Rate}\label{subfig:curriculum_win_rate}]{%
       \includegraphics[width=0.49\textwidth]{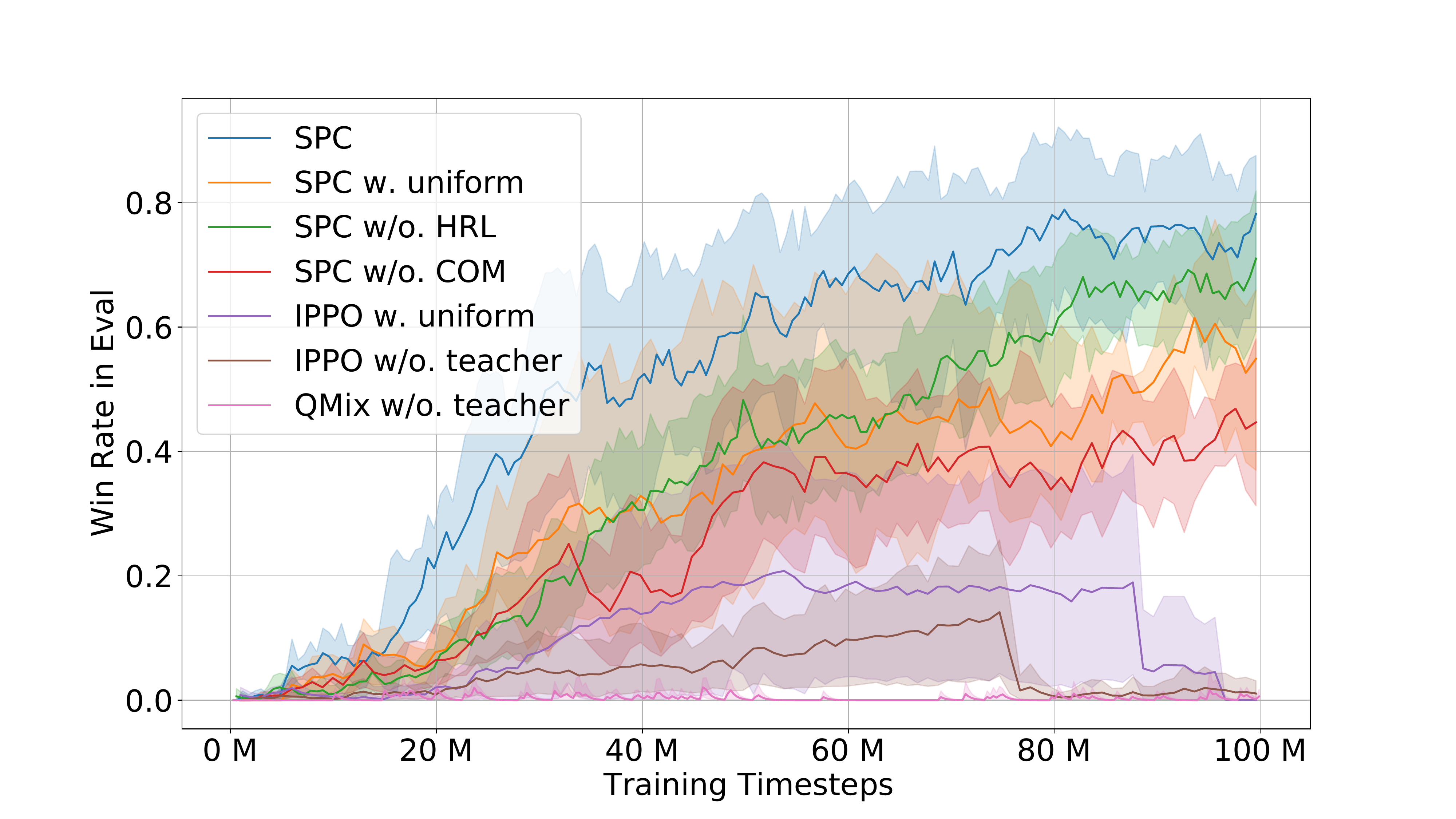}
     }
     \hfill
     \subfloat[{Goal Difference}\label{subfig:curriculum_score_mean}]{%
       \includegraphics[width=0.49\textwidth]{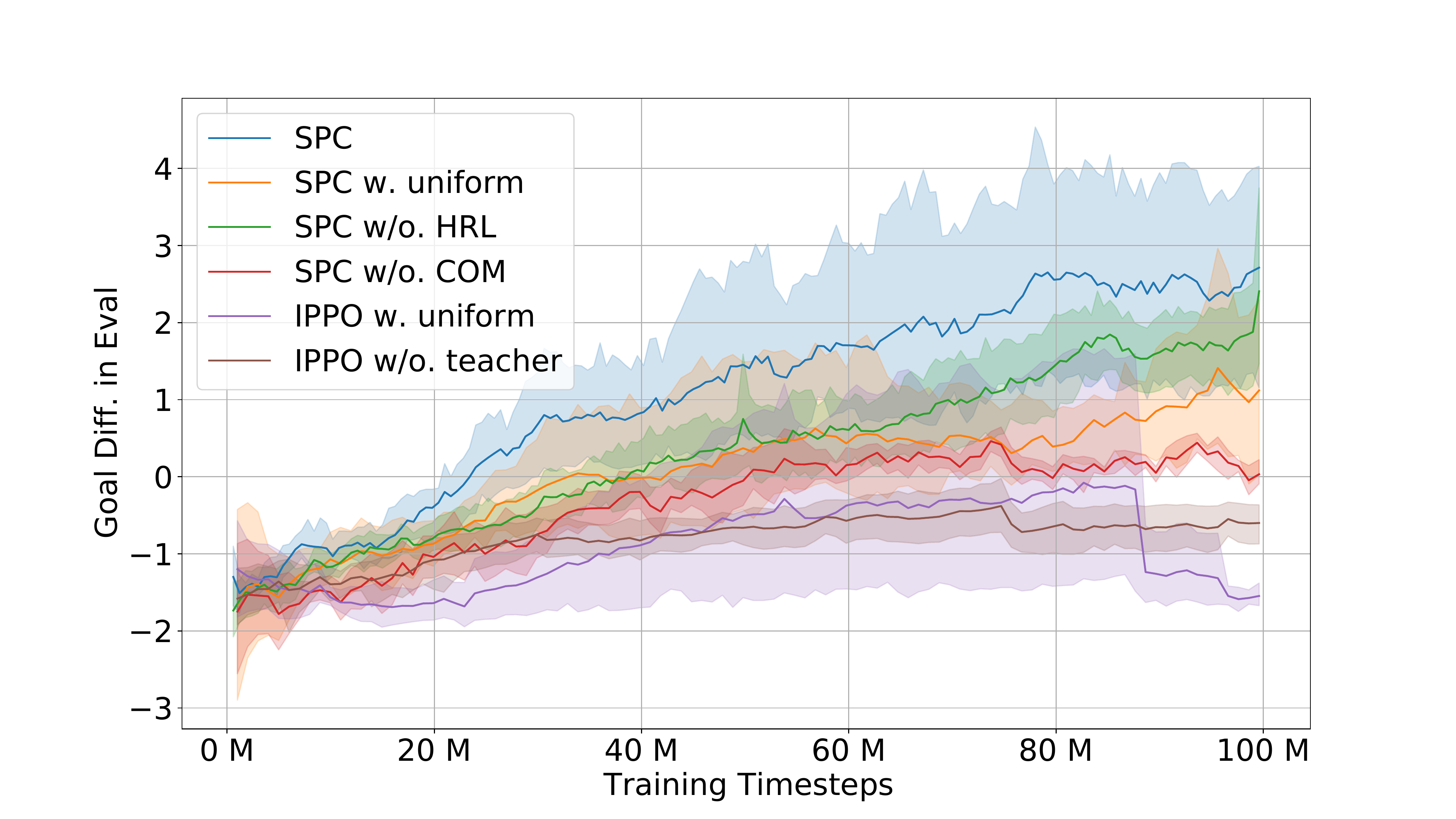}
     }
    
    \caption{The evaluation performance of various methods on 5vs5 football competition.}
    \vspace{-15pt}
      \label{fig:football_performance}
\end{figure*}

\textbf{Metric.} To evaluate the performance of our approach in the GRF 5vs5 scenario, we use metrics beyond just the mean episode reward, as this alone may not accurately reflect the agents' performance. Specifically, we use the win rate and the average goal difference, which is calculated as the number of goals scored by the MARL agents minus the number of goals scored by the opposing team. 

We evaluate the performance of MARL algorithms to justify the need for curriculum learning in complex large-scale MARL problems. To ensure a fair comparison, we modify VACL by removing the centralized critic for MPE tasks. In all experiments, we use individual Proximal Policy Optimization (IPPO) as the backend MARL algorithm. To ensure the robustness of our results, we conduct experiments on a 30-node cluster, with one node containing a 128-core CPU and four A100 GPUs. Each trial of the experiment is repeated over five seeds and runs for 1-2 days.

\begin{figure}[!t]
    \centering
    \includegraphics[width=\linewidth]{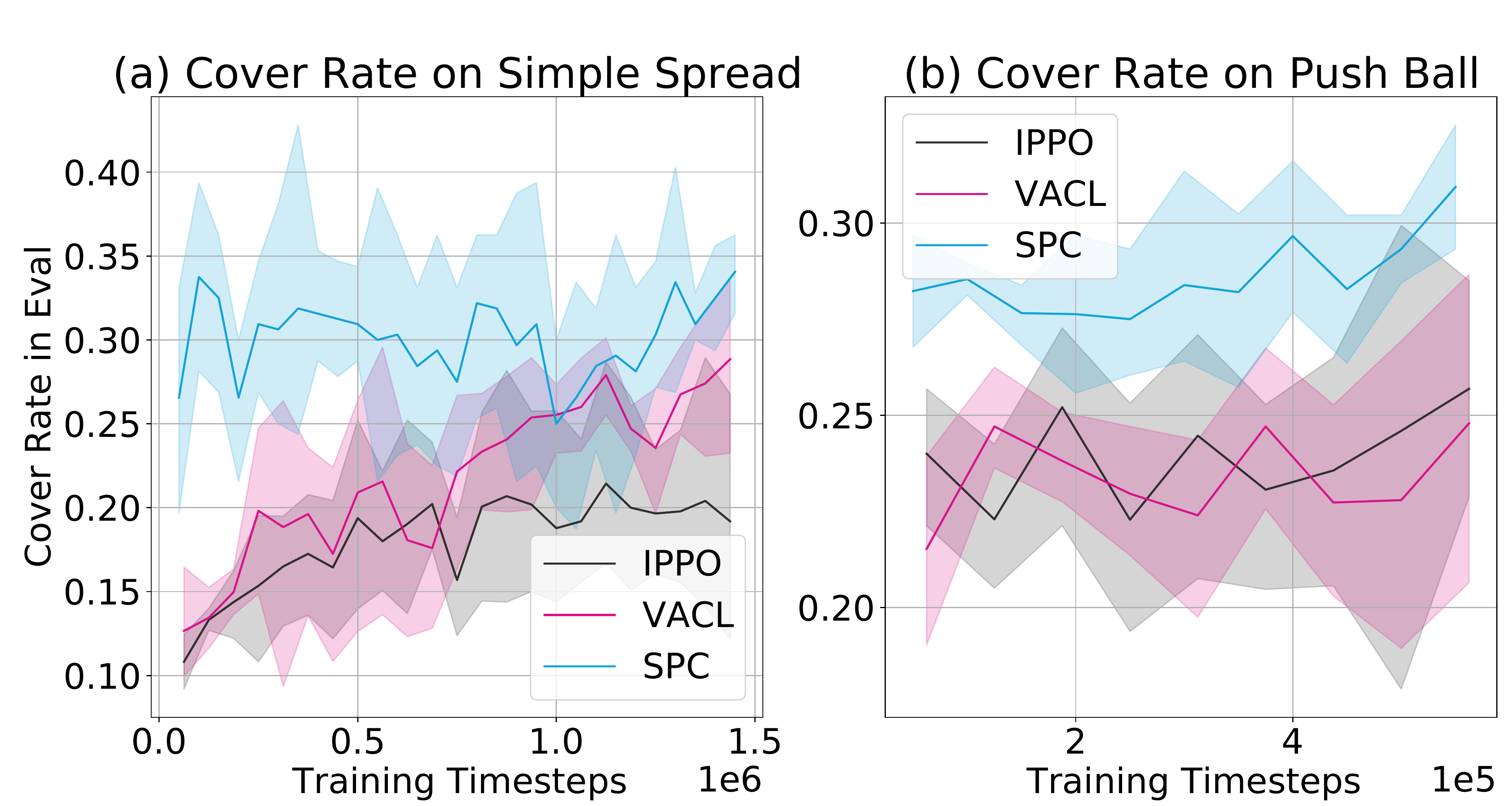}
    \caption{The evaluation performance of various methods on MPE.}
    \label{fig:mpe_performance}
    \vspace{-10pt}
\end{figure}
\begin{figure}[!t]
    \centering
    \includegraphics[width=\linewidth]{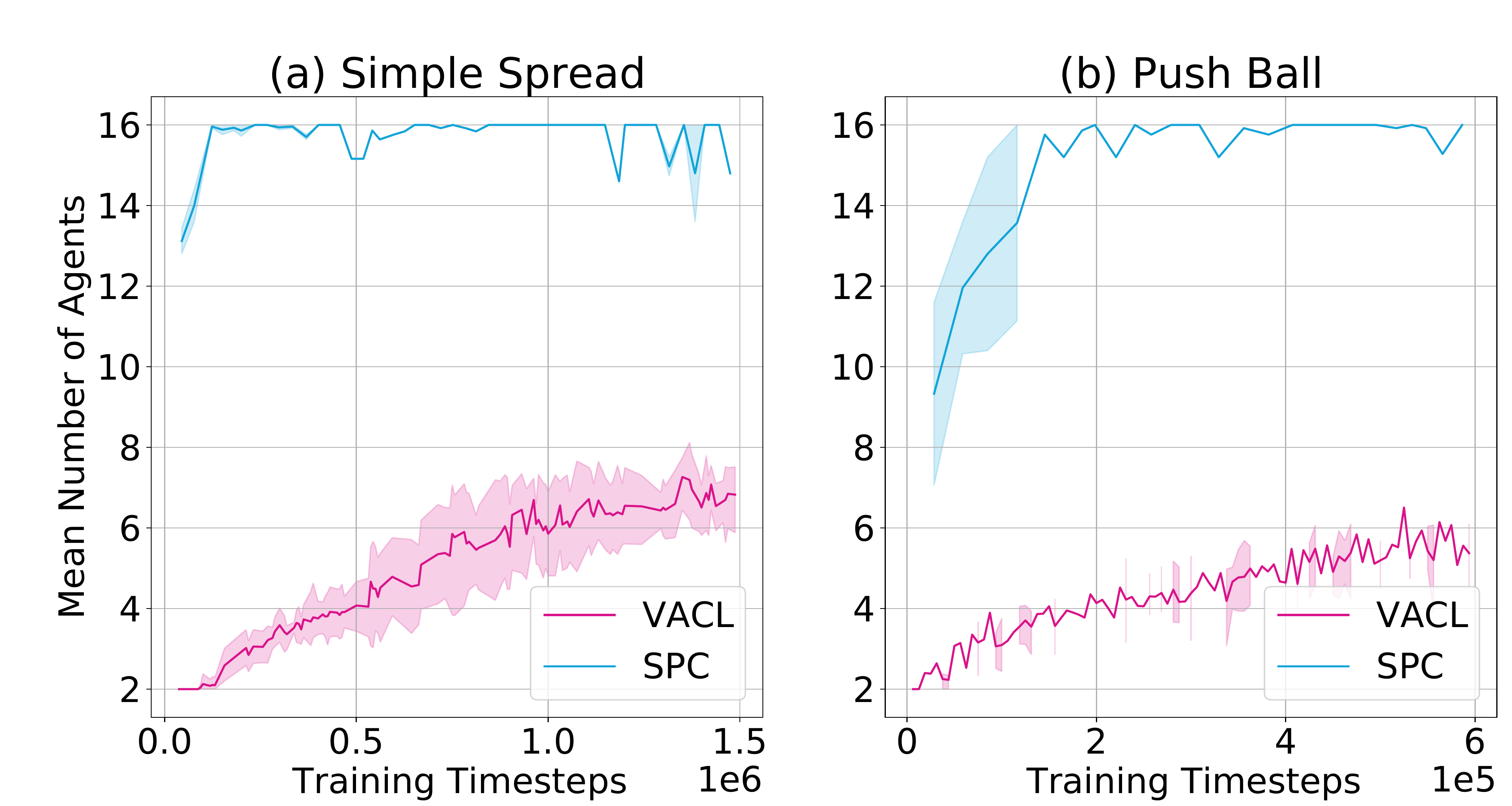}
    \caption{The changes in the number of agents on MPE.}
    \vspace{-10pt}
    \label{fig:mpe_num_agent}
\end{figure}

\begin{figure*}[!t]
  \centering
     \subfloat[{The task distribution of SPC during training.}\label{fig:task_distribution}]{%
       \includegraphics[width=0.45\textwidth]{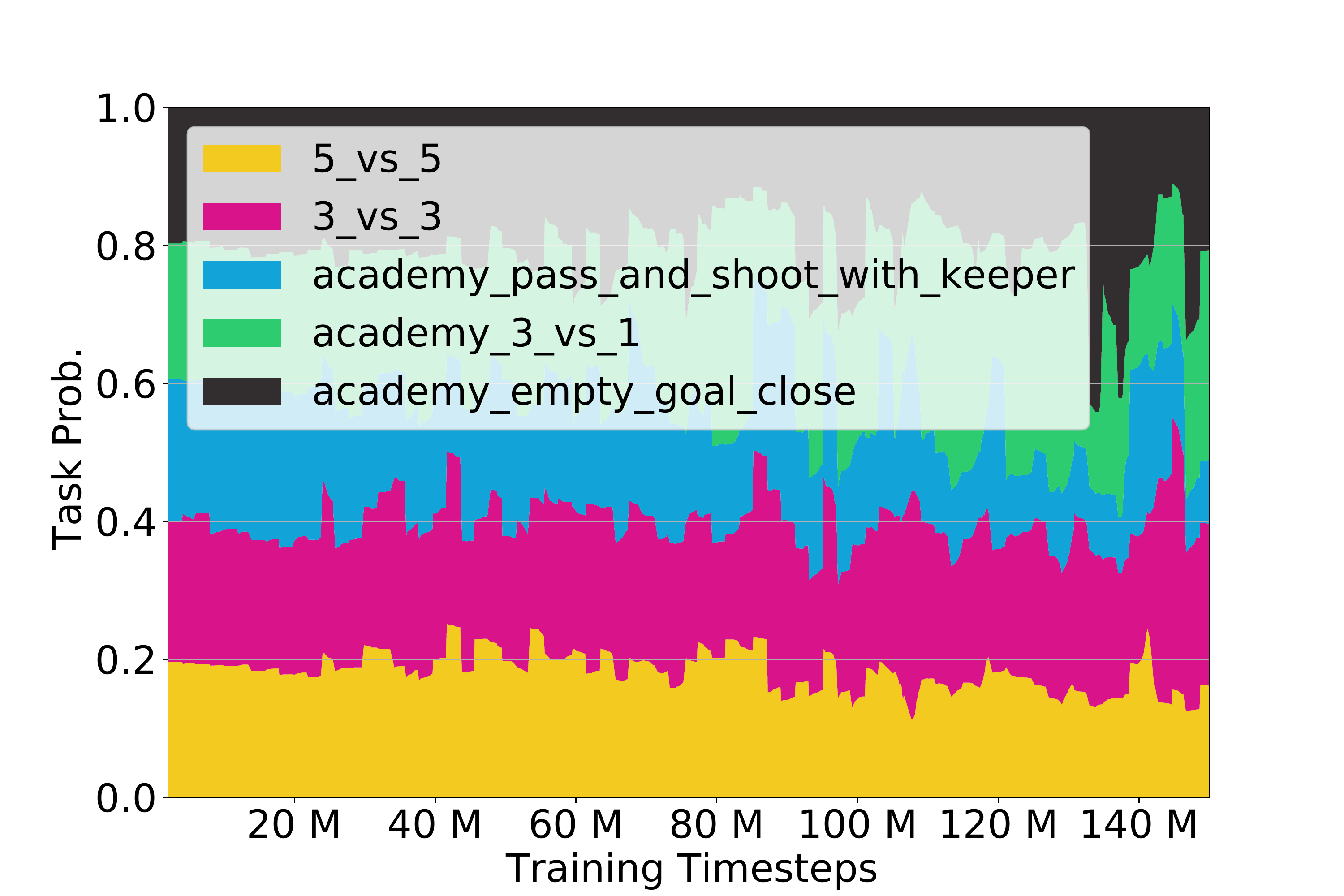}
     }
     \hfill
     \subfloat[{The visualization of contexts.}\label{subfig:context}]{%
       \includegraphics[width=0.45\textwidth]{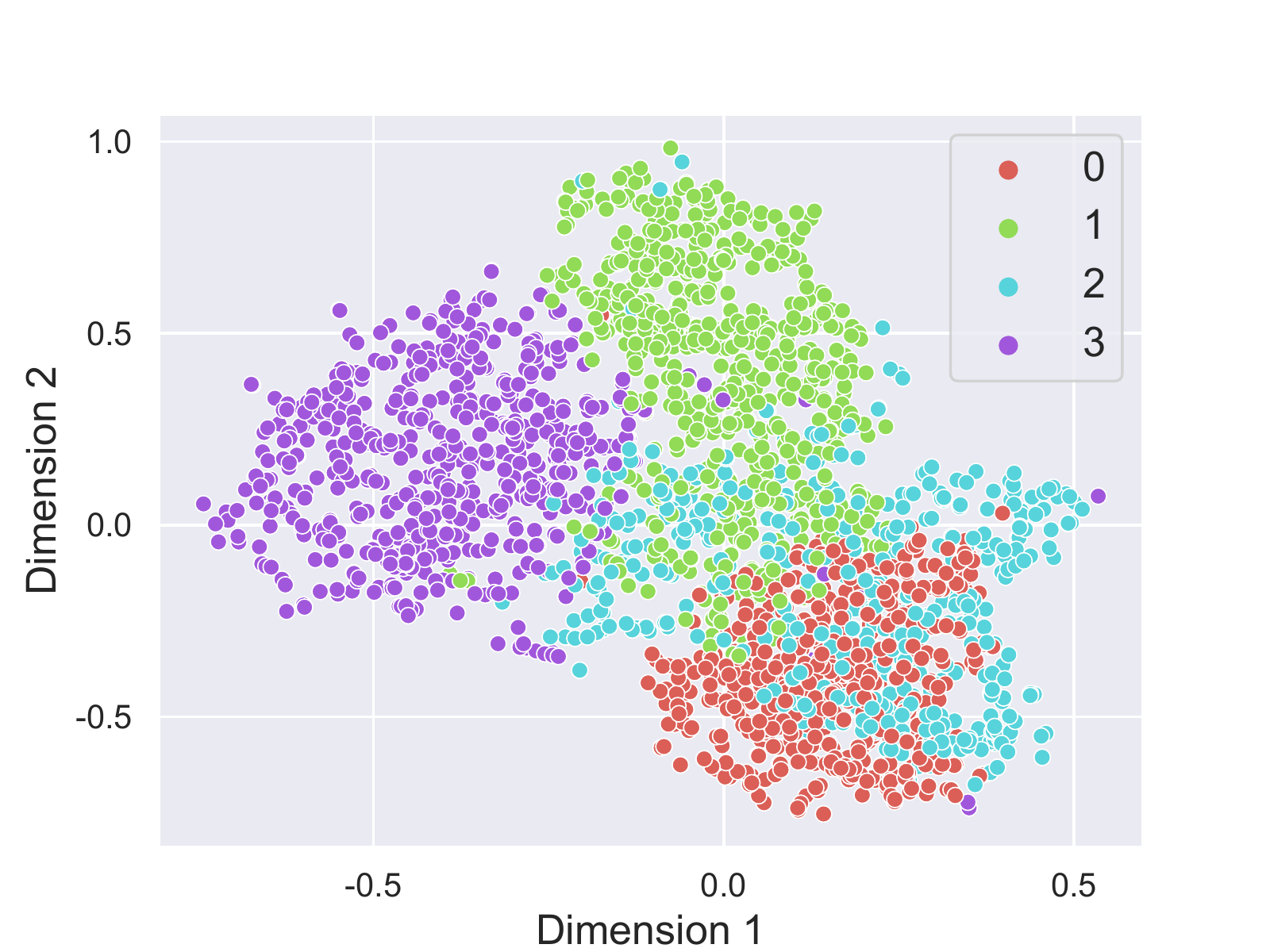}
     }
    
    \caption{Visualization of Learned Curriculum.}
     \vspace{-10pt}
    \label{fig:football_performance}
\end{figure*}

\subsection{The Necessity of Curriculum Learning} \label{subsec:q1}

Our experiments first show that in simple environments, such as MPE, students can directly learn to complete the task without the need for curriculum. For MPE experiments, we randomly select a starting state and the episode ends after a fixed number of maximum steps. Specifically, the task space consists of $n$ agents, where $n \in \{2, 4, 8, 16\}$, and the maximum allowed steps is set to 25. All evaluations are performed on the target task, with $n=16$. IPPO is trained and evaluated directly on the target task, and results in Fig.~\ref{fig:mpe_performance} demonstrate that it performs similarly to the VACL algorithm. Additionally, we observe that the SPC approach only achieves a slightly higher coverage rate than the baseline methods. Furthermore, we investigate the probability variation of different population sizes, shown in Fig.~\ref{fig:mpe_num_agent}. We observe that the curriculum provided by SPC is approaching the target task. These results suggest that in simple environments where the student can learn to directly complete the task, curriculum learning may not be necessary.

However, when it comes to more complex scenarios, such as the 5vs5 task in GRF, our results demonstrate that curriculum learning is a promising solution. As shown in Fig.~\ref{subfig:curriculum_win_rate}, without curriculum learning, QMix and IPPO cannot perform well in the 5vs5 scenario, and IPPO is slightly better than QMix. In Fig.~\ref{subfig:curriculum_score_mean}, we omit the curve of QMix as its mean score is low and affects the presentation of the figure. The reason could be that QMix is an off-policy MARL algorithm, which would rely heavily on the replay buffer. However, in such sparse reward scenarios, the replay buffer has much less effective samples for QMix to learn. For example, the replay buffer would contain tons of zero-score samples, leading to a non-promising performance. Meanwhile, IPPO, with its on-policy nature, is able to achieve better sample efficiency and outperform off-policy algorithms like QMix in such scenarios. Though MARL methods can achieve good performance in basic scenarios in GRF, they fail to solve complex scenarios such as the 5vs5 task. Therefore, curriculum learning is a promising solution to the complex large-scale MARL problem.

\subsection{Performance and Ablation Study} \label{subsec:q2}
\vspace{-5pt}

Our study demonstrates that SPC outperforms VACL in MPE tasks. Instead of training with a continuous relaxation of the population size variable as in VACL, our bandit teacher achieves a higher success rate at test time, since the population size is a discrete variable in nature. Furthermore, the curriculum provided by SPC is effective in exploring the task space and converge to the target task when the task is relatively simple and curriculum is not necessary, as shown in Fig.~\ref{fig:mpe_num_agent}.

In GRF experiments, we do not include VACL in our baselines in the GRF, as its implementation relies heavily on prior knowledge of specific scenarios, such as the thresholds to divide the learning process. Fig.~\ref{fig:football_performance} indicates that SPC has higher win rate and goal difference than IPPO with uniform task sampling in the 5vs5 competition. These experiments demonstrate that when the teacher is rewarded by the student's performance, a bandit-based teacher can exploit the student's learning stage and provide suitable training tasks. 

In our ablation study, we examine the impact of two key components of our SPC algorithm: the contextual multi-armed bandit teacher and the hierarchical structure of the student framework.
By replacing the former with uniform task sampling and removing the latter, As shown in Fig.~ \ref{subfig:curriculum_win_rate} and Fig.~\ref{subfig:curriculum_score_mean}, SPC can achieve a higher win rate and a greater score difference than SPC with uniform and SPC without HRL. Furthermore, SPC with uniform task sampling outperforms IPPO with uniform task sampling. This highlights the importance of HRL in the 5vs5 football competition, and suggests that both the contextual multi-armed bandit and the hierarchical structure contribute equally to the performance of SPC. When removing HRL and bandit, the performance degradation w.r.t. SPC are similar. However, it should be noted that SPC with uniform task sampling has a larger variance in performance than SPC without HRL, indicating that uniform sampling may introduce more undesired tasks for student training. Overall, these results further justify the necessity of SPC in complex large-scale MARL problems\footnote{We also demonstrate the performance of SPC in the GRF 11vs11 full game (see Appendix~\ref{appendix:11vs11}).}.

\subsection{Visualization of Learned Curriculum} \label{subsec:q3}

We visualize the distribution of task sampling of SPC during training based on a selected trial as shown in Fig.~\ref{fig:task_distribution}. At the beginning of training, the task probability appears to be near-uniform, as the teacher explores the task space and keeps track of the student's learning status, acting as an anti-forgetting mechanism. As training progresses, the probabilities change over time. For example, the proportions of 3vs1 and Empty-Goal tasks gradually drop as the student becomes proficient in these scenarios.
We also visualize the distribution of contexts in Fig.~\ref{subfig:context} using t-SNE \citep{van2008visualizing}, where the contexts are collected and stored in a buffer. We divide the contexts into four classes according to the index, and different parts represent different contexts of the final student policy representation.

\section{Discussion}

\textbf{Conclusion.} We present Skilled Population Curriculum (SPC), a novel multi-agent ACL algorithm that addresses scalability and sparse reward issues in multi-agent systems. SPC learns complex behaviors from scratch by incorporating a population-invariant multi-agent communication framework and using a hierarchical scheme for agents to learn skills. Moreover, SPC mitigates non-stationarity by modeling the teacher as a contextual bandit, where the context is represented by the student's policy representation. 
Though our design choices focus on solving the GRF 5vs5 task, we believe that analyzing and addressing these issues is crucial for further development in multi-agent ACL algorithms.
While SPC may be complex to implement due to its various components, we provide clean and well-organized code for ease of use.

\textbf{Limitations.} We acknowledge that there are limitations of our algorithm. SPC is over-designed for simple tasks since our objective is to solve difficult tasks. Also, it would be interesting to understand the impact of varying number of agents on the dynamics of the environment.

\bibliography{icml2023_football}
\bibliographystyle{icml2023}


\clearpage
\appendix
\onecolumn

\section{Contextual Bandit for Limited Number of Contexts}
\label{appendix:bandit}
\begin{algorithm}
  \caption{A contextual bandit algorithm for a small number of contexts}\label{alg:dummy}
  \begin{algorithmic}[1]
    \STATE \textbf{Initialization:} For each context $x$, create an instance $\text{Exp3}_x$ of algorithm $\text{Exp3}$ 
      \FOR {\texttt{each round}}
        \STATE Invoke algorithm $\text{Exp3}_x$ with $x = x_t$ 
        \STATE  Play the action chosen by $\text{Exp3}_x$ 
        \STATE Return reward $r_t$ to $\text{Exp3}_x$ 
    \ENDFOR
\end{algorithmic}
\end{algorithm}

\section{Proof of Theorem \ref{theorem: regret}}
\label{appendix: proof}
\regretbound*

\begin{proof}

Let $S_m$ be the $\epsilon$-uniform mesh on $[0,1]$, that is, the set of all points in $[0,1]$ that are integer multiples of $\epsilon$. We take $\epsilon=1 /(d-1)$ where the integer $d$ is the number of points in $S_m$, which will be adjusted later in the analysis.

We apply Alg. \ref{alg:dummy} to the context space $S_m$. Let $f_{S_m}(x)$ be a mapping from context $x$ to the closest point in $S_m$:
\begin{equation*}
f_{S_m}(x)=\min \left(\underset{x^{\prime} \in S_m}{\operatorname{argmin}}\left|x-x^{\prime}\right|\right)
\end{equation*}
In each round $t$, we replace the context $x_{t}$ with $f_{S_m}\left(x_{t}\right)$ and call $\text{Exp3}_S$. The regret bound will have two components: the regret bound for $\text{Exp3}_S$ and (a suitable notion of) the discretization error. Formally, let us define the ``discretized best response" $\pi_{S_m}^{*}: \mathcal{X} \rightarrow \Phi$: $\pi_{S_m}^{*}(x)=\pi^{*}\left(f_{S_m}(x)\right) \text { for each context } x \in \mathcal{X}$.

We define the total reward of an algorithm Alg is $\operatorname{Reward} \left(\text{Alg}\right)=\sum_{t=1}^{T} r_{t}$. Then the regret of $\text{Exp3}_S$ and the discretization error are defined as:
\begin{equation*}
\begin{aligned}
R_{S}(T) &=\operatorname{Reward}\left(\pi_{S}^{*}\right)-\operatorname{Reward}\left(\text{Exp3}_S\right) \\
\operatorname{DE}(S) &=\operatorname{Reward}\left(\pi^{*}\right)-\operatorname{Reward}\left(\pi_{S}^{*}\right) .
\end{aligned}
\end{equation*}

It follows that regret is the sum $R(T)=R_{S}(T)+\operatorname{DE}(S)$. We have $\mathbb{E}\left[R_{S}(T)\right]=$ $\mathcal{O}(\sqrt{T K \log K})$ from Lemma~\ref{lemma:dummy}, so it remains to upper bound the discretization error and adjust the discretization step $\epsilon$.

For each round $t$ and the respective context $x=x_{t}$, $r\left(\pi_{S}^{*}(x) \mid f_{S}(x)\right) \geq r\left(\pi^{*}(x) \mid f_{S}(x)\right) \geq r\left(\pi^{*}(x) \mid x\right)-\epsilon L$. The first inequality is determined by the optimality of $\pi_{S}^{*}$ and the second is determined by Lipschitzness. Summing this up over all rounds $t$, we obtain $\mathbb{E}\left[\operatorname{Reward}\left(\pi_{S}^{*}\right)\right] \geq \operatorname{Reward}\left[\pi^{*}\right]-\epsilon L T$.

Thus, the regret is that
\begin{equation}
\mathbb{E}[R(T)] \leq \epsilon L T+O\left(\sqrt{\frac{1}{\epsilon} T  K \log T}\right) 
 =O\left(T^{2 / 3}(L K \log T)^{1 / 3}\right)
\end{equation}
For the last inequality, we choose $\epsilon= (\frac{K \log T}{T L^{2}})^{1/3}$.
\end{proof}

\section{SPC on GRF 11vs11 Full Game}
\label{appendix:11vs11}

We also conduct experiments on the GRF 11vs11 full game scenario with sparse reward. As shown in Fig.~\ref{fig:eleven_vs_elevn}, SPC achieves about 50\% win rate against built-in AI in the target task after training with 200 million timesteps.
This is non-trivial as this is one of the most challenging benchmarks for MARL community, and most current MARL methods struggle to achieve progress without hand-crafted engineering.

\begin{figure}[t]
  \centering
    {
     \includegraphics[width=0.49\textwidth]{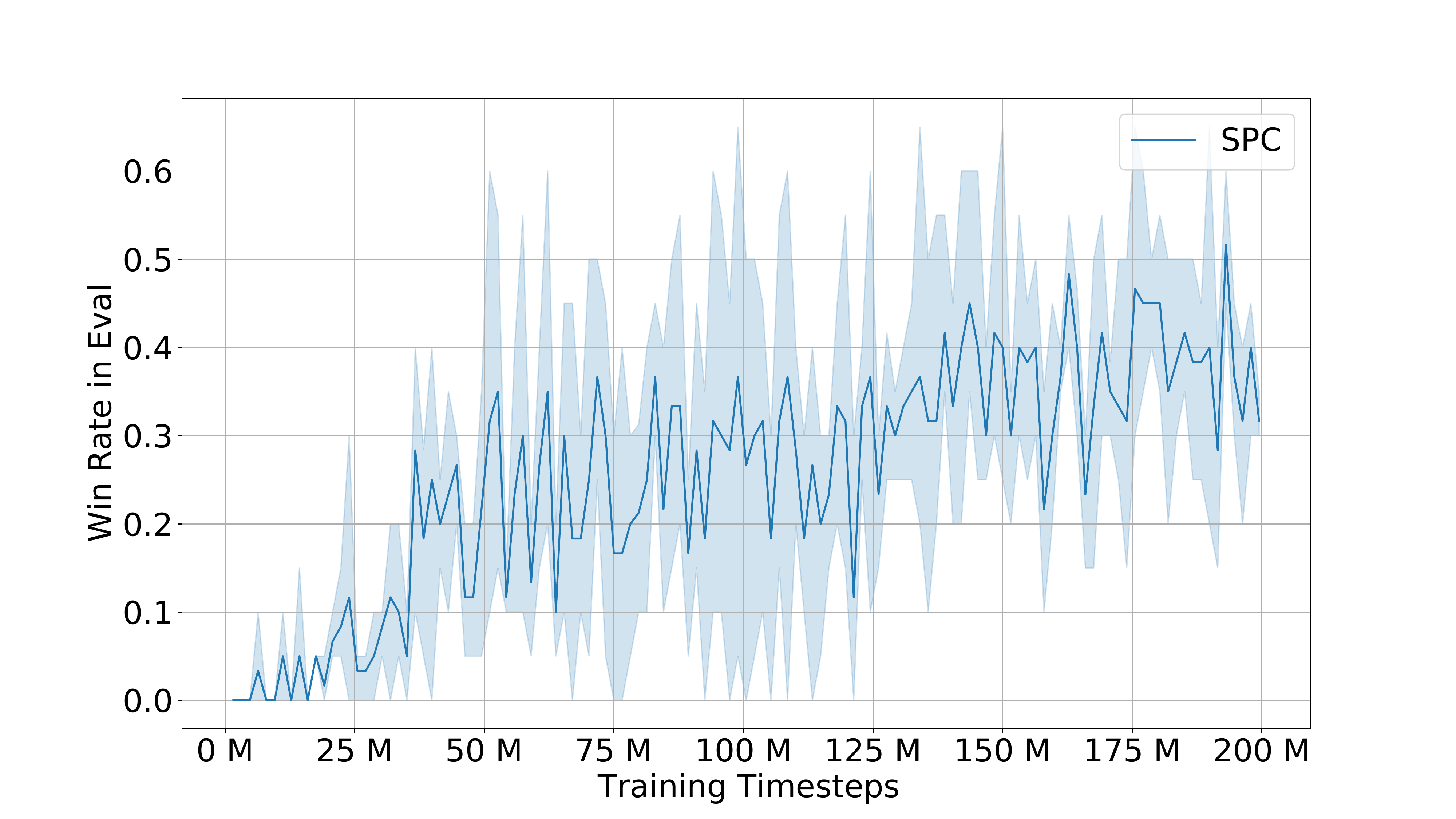}
    }
    \caption{The performance of SPC on the 11v11 scenario.}
    \label{fig:eleven_vs_elevn}
\end{figure}

\section{Qualitatively Analysis On Low-Level Skills}

We demonstrate game statistics under different high-level actions. For example, the times of shooting, passing and running actions per game in GRF. These different low-level policies are induced by the high-level actions. We evaluate these statistics by fixing one agent's high-level actions and maintaining other agents with SPC. The results in Table~\ref{tab:statistics} are averaged over five runs in the 5vs5 scenario.

\begin{center}
\begin{table}[h]
\caption{Statistics of low-level skills.}
\vspace{10pt}
\label{tab:statistics}
\centering
\begin{tabular}{|c|c|c|c|}
\hline
        & shooting per game & passing per game & running per game \\ \hline
skill 1 & 7.9 times         & 0.5 times        & 2254 time steps  \\ \hline
skill 2 & 2.3 times         & 26.4 times       & 2149 time steps  \\ \hline
skill 3 & 1.6 times         & 3.9 times        & 2875 time steps  \\ \hline
\end{tabular} 
\end{table}
\end{center}
\vspace{-20pt}

\section{Comparing Different Teacher Algorithms on GRF Corner-5}

\begin{figure*}[b]
  \centering
     \subfloat[{Win Rate}\label{subfig:corner_win_rate}]{%
       \includegraphics[width=0.49\textwidth]{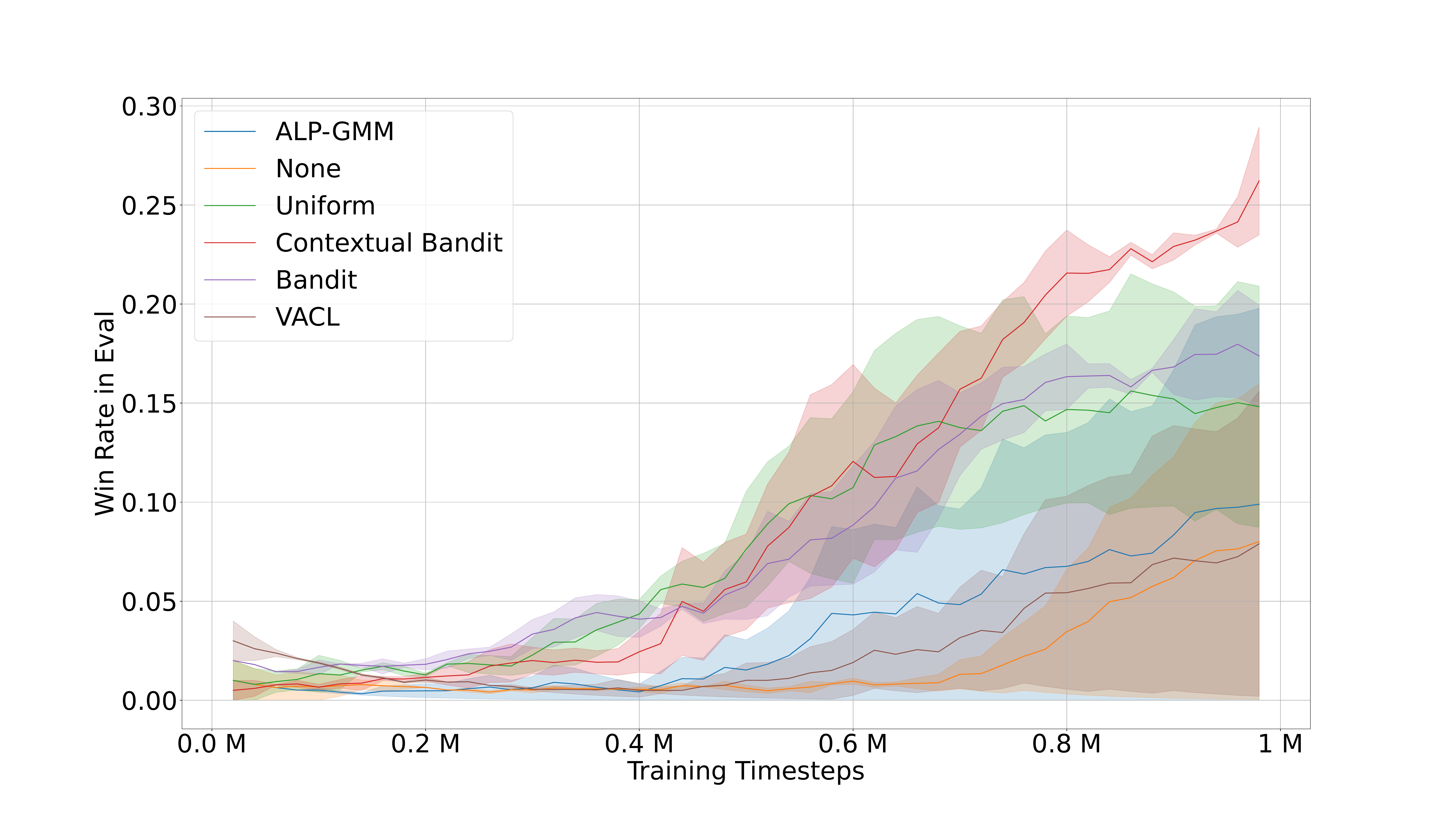}
     }
     \hfill
     \subfloat[{Goal Difference}\label{subfig:corner_score_mean}]{%
       \includegraphics[width=0.49\textwidth]{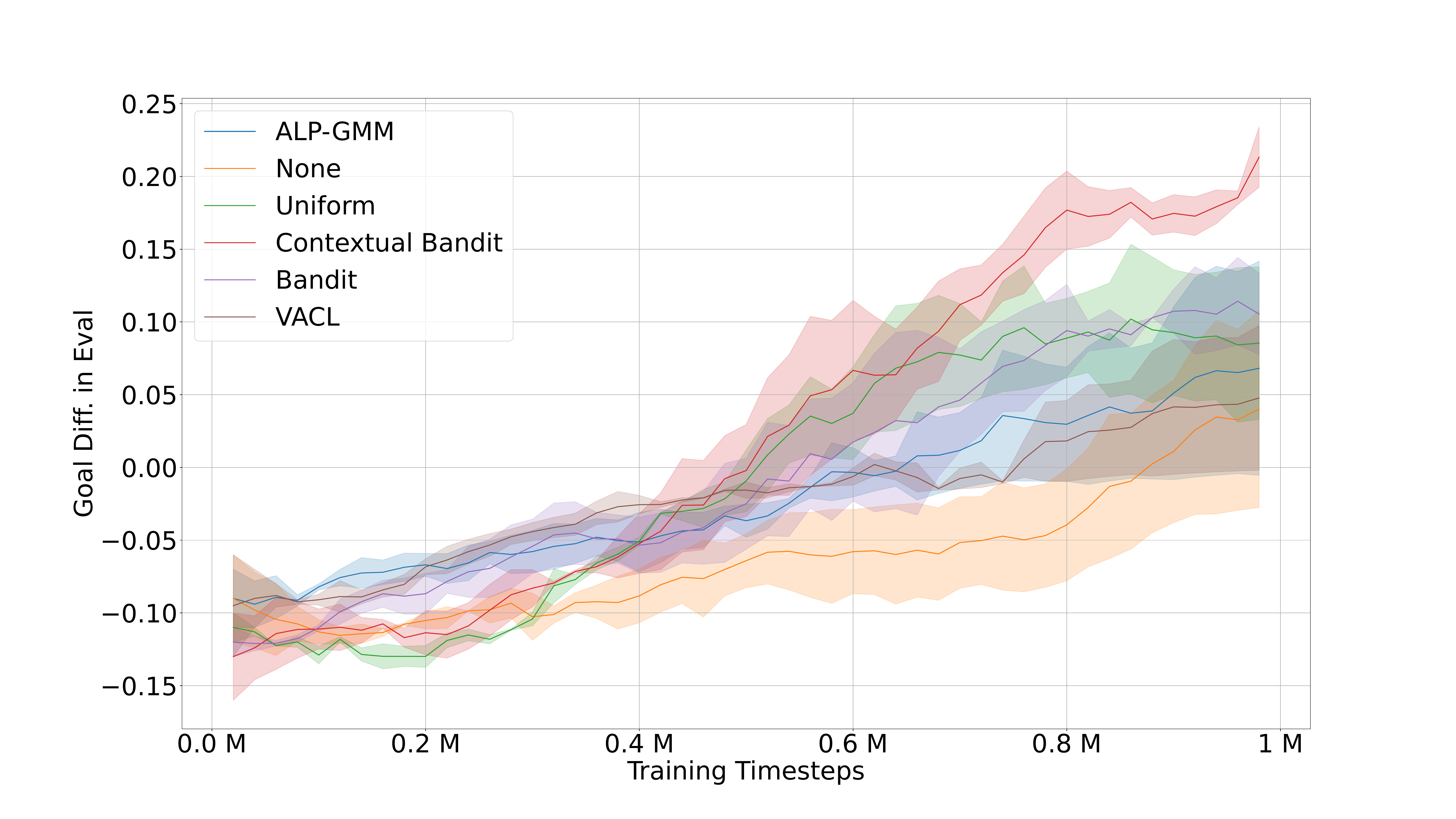}
     }
    \vspace{-5pt}
    \caption{{\color{black}{The evaluation performance of various teacher algorithms on the GRF corner-5 scenario.}}}
      \label{fig:corner_results}
\end{figure*}

To further illustrate the effectiveness of the SPC teacher module, we conduct experiments on the corner-5 scenario on GRF, where the target task is to control five of the eleven players to obtain a goal in the GRF Corner scenario. The experiments are designed to determine whether or not the contextual bandit in SPC outperforms alternative curriculum learning methods to schedule the number of agents in training. We compare SPC teacher against non-curriculum training (None), uniform task sampling (Uniform), a state-of-the-art curriculum learning method (ALP-GMM), and a multi-agent curriculum learning method (VACL). The training task space consists of $n$ agents, where $n \in \{1, 3, 5\}$. All teachers have the same base architecture without transformer architecture and HRL. We also investigate the ablation of the RNN-based contexts (see Contextual Bandit and Bandit). Fig.~\ref{fig:corner_results} shows the benefit of SPC contextual bandit over other ACL methods after training with one million timesteps.


\section{Implementation Details}
\label{appendix:hyperparameters}

We use the default implementation of Proximal Policy Optimization (PPO) in Ray RLlib, which scales out using multiple workers for experience collection. This allows us to use a large amount of rollouts from parallel workers during training to ameliorate high variance and aid exploration. We do multiple rollouts in parallel with distributed workers and use parameter sharing for each agent. The trainer broadcasts new weights to the workers after their synchronous sampling.

\subsection{Google Research Football}

We set five tasks for training the GRF 5vs5 scenario, including 5vs5, 3vs3, Pass-Shoot, 3vs1, and Empty-Goal.
In the Empty-Goal, one agent need to move forward and shoot with an empty goal.
In Pass-Shoot and 3vs3, two agents are controlled to play against a goalkeeper and three players, with different position initialization.
In 3vs1, three agents are controlled to play against a center-back and a goalkeeper.
In 5vs5, four agents are controlled to play against five players. Without loss of generality, we initialize all player with fixed positions and roles as center midfielders.

We use both MLP and self-attention mechanism for the high-level policy, and use MLP for the low-level policy. For high-level policy, the input is first projected to an embedding using two hidden layers with 256 units each and ReLU activation, which is then fed into multi-head self-attention (8 heads, 64 units each). The output is then projected to the actions and values using another fully connected layer with 256 units. For low-level policy, we use MLP with two hidden layers with 256 units each, i.e., the default configuration of policy network in RLlib.

\begin{table}[!htb]
    \centering
    \caption{SPC hyper-parameters.}
\vspace{10pt}
\subfloat[{SPC hyper-parameters used in GRF.}]{
    \begin{tabular}{lc}
        \toprule
        Name & Value \\
        \midrule
        Discount rate & 0.99 \\
        GAE parameter & 1.0 \\
        KL coefficient & 0.2 \\
        Rollout fragment length & 1000 \\
        Training batch size & 100000 \\
        SGD minibatch size & 10000 \\
        \# of SGD iterations & 60 \\
        Learning rate & 1e-4 \\
        Entropy coefficient & 0.0 \\
        Clip parameter & 0.3 \\
        Value function clip parameter & 10.0 \\
        \toprule
    \end{tabular}
    \label{tab:grf-hp}}
\subfloat[{SPC hyper-parameters used in MPE.}]{
    \begin{tabular}{lc}
        \toprule
        Name & Value \\
        \midrule
        Discount rate & 0.99 \\
        GAE parameter & 1.0 \\
        KL coefficient & 0.5 \\
        \# of SGD iterations & 10 \\
        Learning rate & 1e-4 \\
        Entropy coefficient & 0.0 \\
        Clip parameter & 0.3 \\
        Value function clip parameter & 10.0 \\
        \toprule
    \end{tabular}
    \label{tab:mpe-hp}}
\end{table}

\subsection{MPE}

In MPE tasks, agents must cooperate through physical actions to reach a set of landmarks. Agents observe the relative positions of other agents and landmarks, and are collectively rewarded based on the proximity of any agent to each landmark. In other words, the agents have to cover all of the landmarks. Further, the agents are penalized when colliding with each other. The agents need to infer the landmark to cover and move there while avoid colliding with other agents.

The hyper-parameters of SPC in MPE are shown in Table \ref{tab:mpe-hp}. In MPE, hyper-parameters such as rollout fragment length, training batch size and SGD minibatch size are adjusted according to horizon of the scenarios so that policy are updated after episodes are done. We use the same neural network architecture as in GRF, but with 128 units for all MLP hidden layers. Other omitted hyper-parameters follow the default configuration in RLlib PPO implementation. 



\end{document}